\newcommand{\R}{{\mathbb{R}}}
\newcommand{\abs}[1]{\left|#1\right|}
\theoremstyle{plain}
\newtheorem{thm}{Theorem}[section]
\newtheorem{lem}{Lemma}[section]
\newtheorem{cor}{Corollary}[section]
\theoremstyle{definition}
\newtheoremstyle{exercise}{3pt}{3pt}{\slshape\small}{}
{\bfseries}{:}{.5em}{}
\theoremstyle{exercise}
\theoremstyle{remark}
\DeclareMathOperator{\E}{\mathbbm{E}}
\numberwithin{equation}{section}
\DeclareMathOperator{\argmax}{arg max}
\DeclareMathOperator{\rank}{rank}
\title{Bounds on the Bayes Error Given Moments}
\author{%
Bela A. Frigyik%
\thanks{Frigyik and Gupta are with the Department of Electrical Engineering,
  University of Washington, Seattle, WA 98195  (e-mail:
  frigyik@uw.edu, gupta@ee.washington.edu).  This work was supported by the
  United States PECASE Award.}  and
Maya R. Gupta%
\thanks{
}}
\begin{document}

\maketitle

\begin{abstract} 
We show how to compute lower bounds for the supremum Bayes error if the class-conditional distributions must satisfy moment constraints, where the supremum is with respect to the unknown class-conditional distributions. Our approach makes use of Curto and Fialkow's solutions for the truncated moment problem. The lower bound shows that the popular Gaussian assumption is not robust in this regard. We also construct an upper bound for the supremum Bayes error by constraining the decision boundary to be linear.
\end{abstract}

\begin{keywords}
Bayes error, maximum entropy, moment constraint, truncated moments, quadratic discriminant analysis
\end{keywords}


\section{Introduction}

A standard approach in pattern recognition is to
estimate the first two moments of each class-conditional distribution
from training samples, and then assume the unknown distributions are
Gaussians. Depending on the exact assumptions, this approach is called
\emph{linear} or \emph{quadratic discriminant analysis} (QDA)
\cite{HTF,BDA}. Gaussians are known to maximize entropy given the
first two moments \cite{CoverThomas} and to have other nice mathematical properties, but
how robust are they with respect to maximizing the Bayes error? To answer that, in this paper we investigate the more general question: ``What is the maximum possible
Bayes error given moment constraints on the class-conditional
distributions?"

We present both a lower bound and an upper bound for the maximum possible Bayes error.  The lower bound means that there exists a set of class-conditional distributions that have the given moments and have a Bayes error above the given lower bound.  The upper bound means that no set of class-conditional distributions can exist that have the given moments and have a higher Bayes error than the given upper bound.

Our results provide some insight into how confident one can be in a classifier
if one is confident in the estimation of the first $n$ moments. In particular,
given only the certainty that two equally-likely classes have different means (and no trustworthy estimate
of their variances), we show that the Bayes error could be $1/2$, that is, the classes may not be separable at all. Given the first two moments, our results show that the popular Gaussian assumption for the class distributions is fairly optimistic  - the true Bayes error could be much worse. However, we show that the closer the class variances are,  the more robust the Gaussian assumption is.  In general, the given lower-bound may be a helpful way to assess the robustness of the assumed distributions used in generative classifiers.

The given upper bound may also be useful in practice. Recall that the Bayes error is the error that would arise from the optimal decision boundary. Thus, if one has a classifier and finds that the sample test error is much higher than the given upper bound on the worst-case Bayes error, two possibilities should be considered.  First, it may imply that the classifier's decision boundary is far from optimal, and that the classifier should be improved. Or, it could be that the test samples used to judge the test error are an unrepresentative set, and that more test samples should be taken to get a useful estimate of the test error.

There are a number of other results regarding the optimization of  different
functionals given moment constraints (e.g.
\cite{Dowson:73,Poor:80,Moulin:05,Tryphon:06,FriedlanderGupta,Dukkipati:06,Lanckreit:02,Csiszar:01,Csiszar:08}).
However, we are not aware of any previous work bounding the maximum
Bayes error given moment constraints. Some related problems are
considered by Antos et al. \cite{Antos:99};  a key difference to
their work is that while we assume moments are given, they
instead take as given iid samples from the class-conditional distributions,
and they then bound the average error of an estimate of the Bayes error.

After some mathematical preliminaries, we give lower bounds for the
maximum Bayes error in Section \ref{sec:results}.
We construct our lower bounds by creating a truncated moment problem.  The existence of
a particular lower bound then depends on the feasibility of  the
corresponding truncated moment problem, which can be checked using
Curto and Fialkow's solutions \cite{Curto:1991vx} (reviewed in the
appendix).   In Section \ref{sec:upperbound}, we show
that the approach of Lanckreit et al. \cite{Lanckreit:02}, which assumes a linear decision boundary, can be extended to provide an upper bound on the maximum Bayes error.  We provide an  illustration of the tightness of these bounds in Section \ref{sec:experiments}, then  end with a discussion and some open questions.

\section{Bayes Error}
Let $\mathcal{X}$ be a vector space and let $\mathcal{Y}$ be a finite set of
classes. Without loss of generality we may assume that
$\mathcal{Y}=\{1,\ldots,G\}$. Suppose that there is a measurable classification
function $h:\mathcal{X}\to S_G$ where $S_G$ is the
$(G-1)$ probability simplex. Then the i$th$ component of
$h(x)$ can be interpreted as the probability of class $i$ given $x$, and we write
$p(i|x) = h(x)_i$.

For a given $x\in\mathcal{X}$, the Bayes classifier selects the class $\hat{y}(x)$ that maximizes the posterior probability $p(i|x)$ (if there is a tie for the maximum, then any of the tied classes can be chosen). The probability that the Bayes classifier is wrong for a given $x$ is
\begin{equation}\label{eqn:1}
P_e(x) = 1 - \max_{i\in\mathcal{Y}} p(i|x).
\end{equation}
Suppose there is a probability measure $\nu$ defined on $\mathcal{X}$. Then the
\emph{Bayes error} is the expectation of $P_e$:
\begin{equation}\label{eqn:maxin}
\E[P_e] =1-\int_{x \in \mathcal{X}}  \max_{i\in\mathcal{Y}} p(i|x)   d\nu(x).
\end{equation}
The fact that the $p(i|x)$ must sum to one over $i$, and thus $\max_i p(i|x) \geq 1/G$,  implies a trivial upper bound on the Bayes error given in (\ref{eqn:maxin}):
\begin{align*}
\E[P_e]  & \le 1 - \int_x  \frac{1}{G} d\nu(x)  \\
& = \frac{G-1}{G}.
\end{align*}

Suppose that the probability measure $\nu$ is defined on $\mathcal{X}$ such that it is absolutely continuous w.r.t. the Lebesgue measure such that it has density $p(x)$. Or suppose that it is discrete and expressed  as
\begin{equation*}
\nu = \sum_{j=1}^\infty \alpha_j\delta_{x_j},
\end{equation*}
where $\delta_{x_j}$ is the Dirac measure with support $x_j$, $\alpha_j>0$ for all
$j=1,2,\ldots$ and $\sum_{j=1}^\infty \alpha_j=1$, and we say the density $p(x_j) = \alpha_j$.
In either case,  (\ref{eqn:maxin}) can be expressed in terms of the $i$th class prior $p(i)=\int p(i|x)d\nu(x)$ and $i$th class-conditional density $p(x |i)$ (or probability mass function $p(x_j| i)$) as follows:
\begin{align}
\E[P_e]
& =\begin{cases}   1- \displaystyle \sum_{j=1}^{\infty} \max_{i \in \mathcal{Y}} p(x_j|i)p(i) & \textrm{ in the discrete case} \\
1- \displaystyle \int \max_{i\in\mathcal{Y}} p(x|i) p(i) dx  & \textrm{ in the absolutely continuous case.} \\
\end{cases}
\label{eqn:bayeserror}
\end{align}

If $\nu$ is a general measure then Lebesgue's decomposition theorem says that it can be
written as a sum of three measures: $\nu=\nu_d+\nu_{ac}+\nu_{sc}$. Here $\nu_d$ is a
discrete measure and the other two measures are continuous; $\nu_{ac}$ is absolutely
continuous w.r.t. the Lebesgue measure, and $\nu_{sc}$ is the remaining singular part. We
have a convenient representation for both the discrete and the absolutely continuous part
of a measure but not for the singular portion. For this reason we are going to restrict
our attention to measures that are either discrete or absolutely continuous (or a linear
combination of these kind of measures).

\section{Lower Bounds for Worst-Case Bayes Error}\label{sec:results}
Our strategy to providing a lower bound on the supremum Bayes error is to
constrain the $G$ probability distributions $p(x|i)$, $i\in \mathcal{Y}$ to have
an overlap of size $\epsilon \in (0,1)$. Specifically, we constrain the $G$
distributions to each have a Dirac measure  of size $\epsilon$ at the same
location. In the case of uniform class prior probabilities this makes the Bayes
error at least $\epsilon\frac{G-1}{G}$.  The largest such $\epsilon$ for which this
overlap constraint is feasible determines the best lower bound on the worst-case Bayes
error this strategy can provide. The maximum such feasible $\epsilon$ can be determined by checking whether there is a solution to a corresponding truncated moment problem (see the appendix for details). Note that this approach does not restrict the distributions from overlapping elsewhere which would increase the Bayes error, and thus this approach only provides a lower bound to the maximum Bayes error.

We first present a constructive solution showing that no matter what the first moments are, the Bayes error can be arbitrarily bad if only the first moments are given. Then we derive conditions for the size of the lower bound for the two moment case and three moment case, and end with what we can say for the general case of $n$ moments.

\begin{lem} Suppose the first moments $\gamma_{1,i}$ are
given for each $i$ in a subset of $\{1, \ldots, G\}$ and the remaining
class-conditional distributions are unconstrained.  Then for all
$1>\epsilon>0$ one can construct G discrete or absolutely continuous
class-conditional distributions such that the Bayes error $\E[P_e]
\ge (1-\max_{i\in\mathcal{Y}}p(i))\epsilon$.
\end{lem}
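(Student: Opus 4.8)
The plan is to realize the overlap strategy described above explicitly. Choose a single common location $x_0 \in \mathcal{X}$ and write each class-conditional density as a mixture
\[
p(x|i) = \epsilon\,\delta_{x_0} + (1-\epsilon)\,q_i(x),
\]
where $q_i$ is an auxiliary probability distribution (discrete or absolutely continuous) that I am free to choose for each $i\in\mathcal{Y}$. Placing the same Dirac mass $\epsilon$ at $x_0$ in all $G$ distributions is exactly what forces them to overlap and drives the error up; the mixing components $q_i$ are then used to absorb whatever moment constraints are imposed.

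First I would dispose of the moment constraints. For each $i$ in the given subset, the first moment of $p(x|i)$ is $\epsilon x_0 + (1-\epsilon)\mu_i$, where $\mu_i$ denotes the first moment of $q_i$. Setting $\mu_i = (\gamma_{1,i} - \epsilon x_0)/(1-\epsilon)$ makes this equal to the prescribed $\gamma_{1,i}$, and since $0 < \epsilon < 1$ this value is well defined. Any distribution with the required mean and no atom at $x_0$ will serve — for instance a single Dirac at $\mu_i$ when $\mu_i \ne x_0$, a symmetric two-point distribution about $x_0$ otherwise, or an absolutely continuous density with mean $\mu_i$ supported away from $x_0$ if the continuous case is wanted. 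For classes outside the subset, $q_i$ is left entirely free.

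The heart of the argument is then to bound the Bayes error from below through (\ref{eqn:bayeserror}). I would split the sum or integral of $\max_{i\in\mathcal{Y}} p(x|i)p(i)$ into the contribution of the shared atom at $x_0$ and the contribution of the remaining mass (the absolutely continuous part, or the discrete part away from $x_0$). At $x_0$ every class has mass exactly $\epsilon$, so that term equals $\max_{i\in\mathcal{Y}} \epsilon\, p(i) = \epsilon\max_{i\in\mathcal{Y}} p(i)$. Elsewhere the integrand is $\max_{i\in\mathcal{Y}} (1-\epsilon) q_i(x) p(i)$, and applying $\max_{i} a_i \le \sum_{i} a_i$ for nonnegative $a_i$ together with $\int q_i = 1$ and $\sum_{i} p(i)=1$ bounds its total contribution by $(1-\epsilon)$. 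Hence
\[
\int \max_{i\in\mathcal{Y}} p(x|i)\,p(i)\,dx \;\le\; \epsilon\max_{i\in\mathcal{Y}} p(i) + (1-\epsilon),
\]
and substituting into (\ref{eqn:bayeserror}) gives $\E[P_e] \ge \bigl(1-\max_{i\in\mathcal{Y}} p(i)\bigr)\epsilon$, as claimed.

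The construction is elementary, so I do not expect a genuine analytic obstacle; the only points needing care are ensuring the auxiliary $q_i$ can simultaneously hit the prescribed mean and avoid placing extra mass at the overlap point $x_0$ (otherwise the atom term is no longer exactly $\epsilon\max_{i\in\mathcal{Y}} p(i)$ and the clean bound is lost), and phrasing the argument uniformly so that it covers the discrete and absolutely continuous cases at once. Both are routine but should be spelled out to keep the bound sharp.
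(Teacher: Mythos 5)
Your proposal is correct and is essentially the paper's own argument: a shared Dirac atom of mass $\epsilon$ at a common point, the remaining mass $(1-\epsilon)$ placed so as to restore the prescribed first moment, and the bound $\max_i a_i \le \sum_i a_i$ applied away from the atom. If anything your version is slightly more careful than the paper's, which fixes $x_0=0$ and takes $z_i=\gamma_{1,i}/(1-\epsilon)$ without remarking on the coincidence $z_i=0$ that occurs when $\gamma_{1,i}=0$, whereas you handle the analogous coincidence $\mu_i=x_0$ explicitly with a two-point distribution. One caveat: your ``continuous case'' variant, $\epsilon\delta_{x_0}+(1-\epsilon)q_i$ with $q_i$ absolutely continuous, is a mixed measure rather than an absolutely continuous one, so it does not literally supply the ``absolutely continuous'' alternative promised in the lemma's statement; the paper does that with a separate, genuinely absolutely continuous construction (uniform densities of width $2lp(i)$ whose Bayes error tends to $1-\max_{i\in\mathcal{Y}}p(i)$ as $l\to\infty$, which exceeds $(1-\max_{i\in\mathcal{Y}}p(i))\epsilon$ for every $\epsilon<1$). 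Since your discrete construction alone already proves the claimed bound, this is a wording issue rather than a gap.
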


\begin{proof}
This lemma works for any vector space $\mathcal{X}$. The moment constraints
hold if the $i$th class-conditional distribution is taken to be
$p(x | i) = \epsilon\delta_0(x) +
(1-\epsilon)\delta_{z_i}(x)$ where $z_i =
\frac{\gamma_{1,i}}{(1-\epsilon)}$. This constructive solution exists
for any $\epsilon \in (0,1)$ and yields a Bayes error of at least
$(1-\max_{i\in\mathcal{Y}}p(i))\epsilon$. To see this,
substitute the $G$ measures $\{ \epsilon\delta_0 +
(1-\epsilon)\delta_{z_i}\}$  into (\ref{eqn:bayeserror}) to produce
\begin{align*}
 \E[P_e] & =1- \sum_{j=1}^{\infty}  \max_{i\in\mathcal{Y}}\left(  \epsilon \delta_0(x_j) + (1-\epsilon)\delta_{z_i}(x_j) \right) p(i) \\
  &=1-\left(\max_{i\in\mathcal{Y}}p(i)\epsilon +\sum_{j=1}^{\infty}
   \max_{i\in\mathcal{Y}} p(i) (1-\epsilon)\delta_{z_i}(x_j)\right)\\
   &\ge 1- \left(\epsilon\max_{i\in\mathcal{Y}}p(i)+(1-\epsilon)\sum_{i=1}^G
     p(i)\right)\\
   &=\epsilon \left(1-\max_{i\in\mathcal{Y}}p(i)\right).
\end{align*}
For an absolutely continuous example,
consider $\mathcal{X}=\R$. The uniform densities
$p_l(x|i)=\frac{1}{2lp(i)}\mathbb{I}_{[\gamma_{1,i}-(lp(i)),\gamma_{1,i}+(lp(i))]}(x)$
with
$i=1,2,\ldots,G$ (where $\mathbb{I}_E$ is the indicator function of the set
$E$) provide  class-conditional distributions such that as
$l\to\infty$ the Bayes error goes to $1-\max_{i\in\mathcal{Y}}p(i)$. To see this, let
$i^* \in\argmax_{i\in\mathcal{Y}} p(i)$ and consider the difference
$d_i=\gamma_{1,i^*}+lp(i^*)-(\gamma_{1,i}+lp(i))=
(\gamma_{1,i^*}-\gamma_{1,i})+l(p(i^*)-p(i))$. If $p(i^*)-p(i)>0$ then $d_i\to
\infty$ as $l\to \infty$ therefore there is an $l'>0$ such that if $l>l'$ then
$d_i>0$ and hence
$\gamma_{1,i^*}+lp(i^*)>\gamma_{1,i}+lp(i)$. A similar derivation shows that
there is an $l''>0$ such that if $l>l''$ then
$\gamma_{1,i^*}-lp(i^*)<\gamma_{1,i}-lp(i)$. In other words, if $p(i^*)-p(i)>0$
then $p(i^*)p(x|i^*)$ eventually dominates $p(i)p(x|i)$ since all the functions
$p(j)p(x|j)$, $j=1,\ldots,G$ have the same amplitude $\frac{1}{2l}$. If $p(i^*)-p(i)=0$ then the integral of the function $p(i)p(x|i)$ that is not
dominated by $p(i^*)p(x|i^*)$ is $\frac{\abs{\gamma_{1,i^*}-\gamma_{1,i}}}{2l}\to 0$ as $l\to \infty$. Finally, the integral of the dominant function $p(i^*)p(x|i^*)$ is
$\frac{1}{2l}2lp(i^*)=p(i^*)$ and therefore the Bayes error approaches
$1-\max_{i\in\mathcal{Y}}p(i)$ as $l\to \infty$.

\end{proof}

\begin{thm}\label{thm}
Suppose that $\mathcal{X}=\R$ and that there exist $G$ class-conditional measures with
moments $\{\gamma_{1,i},\gamma_{2,i}\}$, $i\in \mathcal{Y}$. Given only this set of moments
$\{\gamma_{1,i},\gamma_{2,i}\}$, $i\in \mathcal{Y}$, a lower bound on the
supremum Bayes error is
\begin{align*}
  \sup \E[P_e] &\ge \sup_{\Delta\in\R} \left[\sum_{i=1}^G
    p(i)\left(1-\frac{(\gamma_{1,i}- \Delta)^2}{\gamma_{2,i}+ \Delta^2 -
      2\Delta \gamma_{1,i}}\right) -\max_{i\in\mathcal{Y}}
    \left\{p(i)\left(1-\frac{(\gamma_{1,i}- \Delta)^2}{\gamma_{2,i}+ \Delta^2 -
      2\Delta \gamma_{1,i}}\right)\right\}\right]\\
&\ge (G-1)\sup_{\Delta\in\R} \min_{i\in\mathcal{Y}}
    \left\{p(i)\left(1-\frac{(\gamma_{1,i}- \Delta)^2}{\gamma_{2,i}+ \Delta^2 -
      2\Delta \gamma_{1,i}}\right)\right\},
\end{align*}
where the supremum on the left-hand-side is taken over all combinations
of $G$ class-conditional measures satisfying the moment constraints.

If $G=2$ then
\begin{equation}\label{eqn:G2case}
  \sup \E[P_e] \geq \sup_{\Delta\in\R} \min_{i=1,2}
    \left\{p(i)\left(1-\frac{(\gamma_{1,i}- \Delta)^2}{\gamma_{2,i}+ \Delta^2 -
      2\Delta \gamma_{1,i}}\right)\right\}.
\end{equation}
Further, if the class priors are equal, then, in terms of the centered second moment  $\sigma_i^2 =
\gamma_{2,i}-\gamma_{1,i}^2$, the optimal $\Delta$ value is one of
\begin{equation}\label{eq:maxdeltaquadratic}
\Delta = \frac{-(\gamma_{1,2}\sigma_1^2-\gamma_{1,1}\sigma_2^2) \pm
\sigma_1\sigma_2 \abs{ \gamma_{1,1}-\gamma_{1,2}}}{\sigma_2^2-\sigma_1^2}
\end{equation}
if $\sigma_1\neq \sigma_2$.  Otherwise, if $\sigma_1= \sigma_2$,
\begin{equation}\label{eq:maxdeltalinear}
\Delta=\frac{\gamma_{1,1}+\gamma_{1,2}}{2},
\end{equation}
and the lower bound simplifies:
\begin{equation*}
\sup \E[P_e] \geq \frac{2\sigma_1^2}{4\sigma_1^2+(\gamma_{1,1}-\gamma_{1,2})^2}.
\end{equation*}
\end{thm}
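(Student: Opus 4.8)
The plan is to realize the overlap strategy described above using the simplest measures that match the two prescribed moments, namely two-point (atomic) measures sharing a common atom. Fix a candidate overlap location $\Delta\in\R$, and for each class $i$ seek a measure of the form $p(x|i) = \lambda_i\delta_{\Delta}(x) + (1-\lambda_i)\delta_{z_i}(x)$. First I would impose the two moment constraints $\lambda_i\Delta + (1-\lambda_i)z_i = \gamma_{1,i}$ and $\lambda_i\Delta^2 + (1-\lambda_i)z_i^2 = \gamma_{2,i}$, eliminate $z_i$, and observe that in the resulting equation the $\lambda_i^2\Delta^2$ terms cancel, leaving the linear relation $\lambda_i(\gamma_{2,i}+\Delta^2-2\Delta\gamma_{1,i}) = \gamma_{2,i}-\gamma_{1,i}^2$, so that
\begin{equation*}
\lambda_i = \frac{\gamma_{2,i}-\gamma_{1,i}^2}{\gamma_{2,i}+\Delta^2-2\Delta\gamma_{1,i}} = 1 - \frac{(\gamma_{1,i}-\Delta)^2}{\gamma_{2,i}+\Delta^2-2\Delta\gamma_{1,i}},
\end{equation*}
which is exactly the per-class quantity in the theorem. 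I would also note that this $\lambda_i$ is in fact the \emph{largest} atom any measure with these two moments can place at $\Delta$: writing the leftover mass as a conditional measure and requiring its variance to be nonnegative gives $(\gamma_{2,i}-\lambda\Delta^2)(1-\lambda)\ge(\gamma_{1,i}-\lambda\Delta)^2$, with equality precisely when that remainder collapses to the single atom $z_i$. This is the $2\times 2$ positive-semidefiniteness condition underlying the truncated moment problem reviewed in the appendix, so the two-point construction is extremal, and it is always valid since $\gamma_{2,i}+\Delta^2-2\Delta\gamma_{1,i} = (\Delta-\gamma_{1,i})^2+\sigma_i^2 > 0$.

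Next I would substitute the $G$ measures into the Bayes-error expression (\ref{eqn:bayeserror}). Taking the auxiliary atoms $z_i$ and $\Delta$ distinct (the generic case; coincidences only raise the lower bound), the contribution at $\Delta$ is $\max_{i\in\mathcal{Y}}\lambda_i p(i)$ while the contribution at each $z_i$ is $(1-\lambda_i)p(i)$, so that $\sum_j\max_i p(x_j|i)p(i) = \max_i\lambda_i p(i) + 1 - \sum_i\lambda_i p(i)$ and hence
\begin{equation*}
\E[P_e] = \sum_{i=1}^G p(i)\lambda_i - \max_{i\in\mathcal{Y}}\{p(i)\lambda_i\}.
\end{equation*}
Taking the supremum over $\Delta\in\R$ and inserting the formula for $\lambda_i$ produces the first displayed inequality. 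The second follows from the elementary fact that, for nonnegative $a_i = p(i)\lambda_i$, the sum minus the maximum is a sum of the $G-1$ smallest terms, each at least $\min_i a_i$, giving $\sum_i a_i - \max_i a_i \ge (G-1)\min_i a_i$. For $G=2$ this is an equality ($a_1+a_2-\max = \min$), which yields (\ref{eqn:G2case}).

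For the equal-prior case I would factor out $p(i)=\tfrac12$ and maximize $\min_{i=1,2}\lambda_i(\Delta)$ with $\lambda_i(\Delta) = \frac{\sigma_i^2}{(\Delta-\gamma_{1,i})^2+\sigma_i^2}$. Taking WLOG $\gamma_{1,1}<\gamma_{1,2}$ (equal means being trivial), each $\lambda_i$ is a unimodal bump peaking at $\Delta=\gamma_{1,i}$; on $[\gamma_{1,1},\gamma_{1,2}]$ the function $\lambda_1$ is strictly decreasing and $\lambda_2$ strictly increasing, whereas outside this interval both move the same way, so the maximizer of the minimum must sit at the unique interior crossing $\lambda_1(\Delta)=\lambda_2(\Delta)$. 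Clearing denominators reduces this to $\sigma_1^2(\Delta-\gamma_{1,2})^2 = \sigma_2^2(\Delta-\gamma_{1,1})^2$; when $\sigma_1\ne\sigma_2$ it is a quadratic whose discriminant simplifies to $4\sigma_1^2\sigma_2^2(\gamma_{1,1}-\gamma_{1,2})^2$, giving the two candidates of (\ref{eq:maxdeltaquadratic}) — this is the source of the phrase ``one of,'' as one keeps the root lying between the means. When $\sigma_1=\sigma_2$ the quadratic degenerates and the crossing is the midpoint (\ref{eq:maxdeltalinear}); substituting $\Delta=(\gamma_{1,1}+\gamma_{1,2})/2$ and $\sigma_1=\sigma_2$ into $\tfrac12\lambda_i(\Delta)$ gives the closed form $\frac{2\sigma_1^2}{4\sigma_1^2+(\gamma_{1,1}-\gamma_{1,2})^2}$.

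The main obstacle is the max-min optimization in the equal-prior case: one must justify that the supremum over $\Delta$ is attained where the two bumps are equal, rather than at an endpoint or at a single bump's peak, and then select the correct root of the quadratic. The monotonicity argument above settles this cleanly. The remainder is bookkeeping, the one genuinely conceptual point being the extremality of the two-point construction, which is what ties the bound to the truncated moment problem.
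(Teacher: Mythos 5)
Your proof is correct, and it replaces the paper's key technical step with a genuinely more elementary one. The paper first places the shared atom at the origin and invokes Curto and Fialkow's solution of the truncated moment problem (positive semidefiniteness plus a rank condition on the Hankel matrix $A(1)$, reviewed in Section \ref{sec:tm}) to certify that \emph{some} measure with zeroth moment $1-\epsilon$ and the prescribed first two moments exists whenever $\epsilon \le 1-\gamma_1^2/\gamma_2$; it then transports the atom to a general location via a separate shift-of-moments computation, and finally optimizes over the admissible $\epsilon_i$. You instead exhibit the extremal measures explicitly as two-point measures $\lambda_i\delta_\Delta+(1-\lambda_i)\delta_{z_i}$ and solve for $\lambda_i$ in closed form directly at the atom location $\Delta$, which collapses the existence question, the shifting argument, and the optimization over $\epsilon_i$ into one computation; your variance/PSD remark confirms the construction attains the largest feasible atom, so nothing is lost relative to the paper's bound. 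What the paper's heavier machinery buys is generality: the Curto--Fialkow route is exactly what carries over to Corollary \ref{cor3moments} and Lemma \ref{lemsketch}, where explicit extremal constructions are no longer available. Your equal-prior analysis (unimodal bumps, interior crossing, discriminant $4\sigma_1^2\sigma_2^2(\gamma_{1,1}-\gamma_{1,2})^2$) matches the paper's derivative-sign argument. One small caveat: at $\Delta=\gamma_{1,i}$ your formula gives $\lambda_i=1$ and the two-point measure degenerates to $\delta_{\gamma_{1,i}}$, which violates the second-moment constraint when $\sigma_i>0$, so the construction is not ``always valid''; this is harmless, since excluding these finitely many $\Delta$ does not change $\sup_{\Delta}$ (the bracketed expression is continuous in $\Delta$), and the equal-prior optimizer lies strictly between the two means in any case, but the exclusion should be stated.
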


\begin{proof}
Consider some $0 < \epsilon < 1$. If the class prior is uniform then a
sufficient condition for the Bayes
error to be at least $\frac{G-1}{G}\epsilon$ is if  all of the unknown
measures share a Dirac measure of at least $\epsilon$. First, we place this Dirac measure at zero and find the maximum $\epsilon$ for which this can be done. Then later in the proof we show that a larger  $\epsilon$ (and hence a tighter lower bound on the maximum Bayes error) can be found by placing this shared Dirac measure in a more optimal location, or equivalently, by shifting all the measures.

Suppose a probability measure $\mu$ can be expressed in the form
$\epsilon\delta_0+\tilde{\mu}$ where  $\tilde{\mu}$ is some measure such that $\tilde{\mu}(\{0\})=0$. If $\mu$ satisfies the
original moment constraints then $\tilde{\mu}$ also satisfies them; this follows directly
from the moment definition for $n\ge 1$:
\begin{equation*}
  \int x^n d\mu(x) = 0^n \epsilon +  \int x^n d\tilde{\mu}(x) = \int x^n d\tilde{\mu}(x).
\end{equation*}

Also $\tilde{\mu}(\mathcal{X})=1-\epsilon$. Thus, we require
a measure $\tilde{\mu}$ with a zeroth moment $\gamma_0=1-\epsilon>0$ and the original first
and
second moments $\gamma_1, \gamma_2$.
Then, as described in Section \ref{sec:tm}, there are two conditions
that we have to check. In order to have a measure
with the prescribed moments, the matrix
  \[
  A=A(1)=
  \begin{bmatrix}
    1-\epsilon &\gamma_1\\
    \gamma_1 &\gamma_2
  \end{bmatrix}.
  \]
  has to be positive semidefinite, which holds if and only if $\epsilon \le
  1-\frac{\gamma_1^2}{\gamma_2}$. (Note that the Theorem assumes that there exists a distribution with the given moments, and thus the above implies that $\gamma_2 \geq \gamma_1^2$).  Moreover, the rank of matrix $A$ and
  the rank of $\gamma$ (for notation see Section \ref{sec:tm}) have to
  be the same. Matrix $A$ can have rank 1 or 2. If $\rank(A)=1$ then
  the columns of $A$ are linearly dependent and therefore
  $\rank(\gamma)=1$. If $\rank(A)=2$ then $A$ is invertible
  and $\rank(\gamma)=2$. Thus there is a measure $\tilde{\mu}$ with moments
  $\{1-\epsilon,\gamma_1,\gamma_2\}$ iff $0\le \epsilon\le
  1-\frac{\gamma_1^2}{\gamma_2}$. If such a $\tilde{\mu}$ exists, then there
 also exists a discrete probability measure with moments $\{1-\epsilon,\gamma_1,\gamma_2\}$ and $0\le \epsilon\le
  1-\frac{\gamma_1^2}{\gamma_2}$ by Curto and Fialkow's Theorem 3.1 and Theorem 3.9 \cite{Curto:1991vx}. 
  
Suppose we have $G$ such discrete probability measures satisfying the
corresponding moments constraints given in the statement of this
theorem. Denote the i{\it th} discrete probability measure by
$\nu_i = \epsilon_i\delta_0+\sum_{j= 1}^{\infty} \alpha_{j,i} \delta_{x_j}$ where $0 \le \epsilon_i\le  1-\frac{\gamma_{1,i}^2}{\gamma_{2,i}}$ and $x_j \neq 0$ for all $j$, and  $j$ indexes the set of all non-zero atoms in the $G$ discrete measures $\{\nu_i\}$. Then the supremum Bayes error is bounded below by the Bayes error for this set of discrete measures:
\begin{align}
\sup \E[P_e]  &\ge 1-\max_{i\in\mathcal{Y}} \{\epsilon_ip(i)\} -  \sum_{j = 1}^{\infty}  \max_{i \in \mathcal{Y}} \alpha_{j,i}  p(i)  \\
&\ge 1-\max_{i\in\mathcal{Y}} \{\epsilon_ip(i)\} -  \sum_{j = 1}^{\infty}  \sum_{i=1}^G \alpha_{j,i} p(i)  \\
&= 1-\max_{i\in\mathcal{Y}} \{\epsilon_ip(i)\} -  \sum_{i=1}^G  p(i) \sum_{j = 1}^{\infty}  \alpha_{j,i} \\
&= 1-\max_{i\in\mathcal{Y}} \{\epsilon_ip(i)\}- \sum_{i=1}^G
p(i)(1-\epsilon_i)\nonumber \\
&=\sum_{i=1}^G p(i)\epsilon_i-\max_{i\in\mathcal{Y}} \{\epsilon_ip(i)\}.\label{eq:maxrhs}
\end{align}

This is true for any collection of $\epsilon_i$, $\epsilon_i\le
  1-\frac{\gamma_{1,i}^2}{\gamma_{2,i}}$. This means that $\sup \E[P_e]$ is an
  upper bound for (\ref{eq:maxrhs}) for these admissible $\epsilon_i$ and we can
  find a tighter inequality by finding the supremum of (\ref{eq:maxrhs}) over the
  set of admissible $\epsilon_i$. The domain of the function (\ref{eq:maxrhs}) is the Cartesian product
  $\prod_{i=1}^G \left[0, 1- \frac{\gamma_{1,i}^2}{\gamma_{2,i}}\right]$. It is a non-empty
  compact set and (\ref{eq:maxrhs}) is continuous, so we can expect to find a
  maximum. The maximum is unique and to find it let $(\epsilon_1,\ldots,\epsilon_G)$ be
  any element in the domain and let $i^*\in
  \argmax_i\left\{p(i)\left(1-\frac{\gamma_{1,i}^2} {\gamma_{2,i}}\right)\right\}$. Since
  $p(i)\ge 0$, we have
\begin{align*}
\sum_{i=1}^G p(i)\epsilon_i-\max_{i\in\mathcal{Y}}\{p(i)\epsilon_i\} &\le \sum_{i\neq  i^*}p(i)\epsilon_i \\
&\le \sum_{i\neq i^*}p(i)\left(1-\frac{\gamma_{1,i}^2}{\gamma_{2,i}}\right)\\
&= \sum_{i=1}^G p(i)\left(1-\frac{\gamma_{1,i}^2}{\gamma_{2,i}}\right) -\max_{i\in\mathcal{Y}}
    \left\{p(i)\left(1-\frac{\gamma_{1,i}^2}{\gamma_{2,i}}\right)\right\}.
\end{align*}
Therefore
\begin{subequations}
\begin{align}
\sup \E[P_e] &\ge \sum_{i=1}^G p(i)\left(1-\frac{\gamma_{1,i}^2}{\gamma_{2,i}}\right)
-\max_{i\in\mathcal{Y}}\left\{p(i)\left(1-\frac{\gamma_{1,i}^2}{\gamma_{2,i}}\right)\right\}
\nonumber \\
&\ge (G-1)\min_{i\in\mathcal{Y}}\left\{p(i)
  \left(1-\frac{\gamma_{1,i}^2}{\gamma_{2,i}}\right)\right\}, \label{eqn:shiftylower bound}
\end{align}
\end{subequations}
where the supremum on the left-hand-side is taken over all the
combination of class-conditional measures that satisfy the given
moments constraints.


The next step follows from the fact that the Lebesgue measure
and the counting measure are shift-invariant measures and the Bayes
error is computed by integrating some functions against those measures. Suppose we had $G$ class distributions, and we shift each of them by $\Delta$. The Bayes error would not change. However, our lower bound given in (\ref{eqn:shiftylower bound}) depends
on the actual given means $\{\gamma_{1,i}\}$, and in some cases we can produce
a better lower bound by shifting the distributions before applying the above lower bounding strategy. The shifting approach we present next is equivalent to placing the shared $\epsilon$ measure someplace other than at the origin.

Shifting a distribution by
$\Delta$ does change all of the moments (because they are not centered
moments), specifically, if $\mu$ is a probability measure with finite
moments $\gamma_0 = 1$, $\gamma_1$, \ldots, $\gamma_n$, and
$\mu_{\Delta}$ is the measure defined by $\mu_{\Delta}(D) = \mu(D +
\Delta)$ for all $\mu$-measurable sets $D$, then the $n$-th
non-centered moment of the shifted measure $\mu_{\Delta}$ is
\begin{equation*}
  \tilde{\gamma}_n = \int x^n d\mu_{\Delta}(x) = \int (x-\Delta)^n
  d\mu(x) = \sum_{k=0}^n (-1)^{n-k}\binom{n}{k}\Delta^{n-k} \gamma_k,
\end{equation*}
where the second equality can easily be proven for any $\sigma$-finite
measure using the definition of integral. This same formula shows that
shifting back the measure will transform back the moments.

For the two-moment case,  the shifted
measure's moments are related to the original moments by:
\begin{align*}
  \tilde{\gamma}_1 &= \gamma_1 - \Delta\\
  \tilde{\gamma}_2 &= \gamma_2 + \Delta^2 - 2\Delta\gamma_1.
\end{align*}

Then a tighter lower bound can be produced by choosing the shift
$\Delta$ that maximizes the shift-dependent lower bound given in
(\ref{eqn:shiftylower bound}):
\begin{align*}
  \sup \E[P_e] &\ge \sup_{\Delta\in\R} \left[\sum_{i=1}^G
    p(i)\left(1-\frac{(\gamma_{1,i}- \Delta)^2}{\gamma_{2,i}+ \Delta^2 -
      2\Delta \gamma_{1,i}}\right) -\max_{i\in\mathcal{Y}}
    \left\{p(i)\left(1-\frac{(\gamma_{1,i}- \Delta)^2}{\gamma_{2,i}+ \Delta^2 -
      2\Delta \gamma_{1,i}}\right)\right\}\right]\\
&\ge (G-1)\sup_{\Delta\in\R} \min_{i\in\mathcal{Y}}
    \left\{p(i)\left(1-\frac{(\gamma_{1,i}- \Delta)^2}{\gamma_{2,i}+ \Delta^2 -
      2\Delta \gamma_{1,i}}\right)\right\}.
\end{align*}

If $G=2$ then this lower bound is
\begin{multline*}
  \sup_{\Delta\in\R} \left[\sum_{i=1}^2
    p(i)\left(1-\frac{(\gamma_{1,i}- \Delta)^2}{\gamma_{2,i}+ \Delta^2 -
        2\Delta \gamma_{1,i}}\right) -\max_{i\in\mathcal{Y}}
    \left\{p(i)\left(1-\frac{(\gamma_{1,i}- \Delta)^2}{\gamma_{2,i}+ \Delta^2 -
        2\Delta \gamma_{1,i}}\right)\right\}\right]\\
  = \sup_{\Delta\in\R} \min_{i=1,2}
  \left\{p(i)\left(1-\frac{(\gamma_{1,i}- \Delta)^2}{\gamma_{2,i}+ \Delta^2 -
      2\Delta \gamma_{1,i}}\right)\right\}.
\end{multline*}

We can say more in the case of equal class priors, that is, if $p(1)=p(2)=1/2$. The
functions \[
f_i(\Delta)=1-\frac{(\gamma_{1,i}-
  \Delta)^2}{\gamma_{2,i}+ \Delta^2 - 2\Delta \gamma_{1,i}}
\]
are maximized at  $\Delta = \gamma_{1,i}$,  where the maximum value is $1$ and the derivative of
function $f_i$ is strictly positive
for $\Delta<\gamma_{1,i}$ and strictly negative for
$\Delta>\gamma_{1,i}$, $i=1,2$. This means that the potential maximum occurs at the point
where the two functions are equal. This results in a quadratic equation if
$\sigma_2^2 \neq \sigma_1^2$ with
solutions (\ref{eq:maxdeltaquadratic}),  and otherwise a linear one with solution (\ref{eq:maxdeltalinear}).

If $\gamma_{1,1}=\gamma_{1,2}$ then the function $f_i$ with smaller $\gamma_{2,i}$ will
provide us with the lower bound which is $1/2$, as expected. If $\gamma_{1,1}\neq
\gamma_{1,2}$ then since $f_i(\gamma_{1,i})=1$ the maximum occurs at a $\Delta$ value which
is between the two $\gamma_{1,i}$. To see this let $J$ be the interval defined by the two
$\gamma_{1,i}$. As a consequence of the strict nature of the derivatives, for any $\Delta$
value outside of the interval $J$ the function
\[
\min_{i\in\mathcal{Y}}\left\{1-\frac{(\gamma_{1,i}- \Delta)^2}{\gamma_{2,i}+ \Delta^2 -
    2\Delta \gamma_{1,i}}\right\}
\]
is less than on $J$. But on $J$ the function $f_1(\Delta)-f_2(\Delta)$ is continuous and
thanks to the fact that $f_i(\gamma_{1,i})=1$ and the behavior of the derivatives, it has
different sign at the two endpoints of $J$. This means that there is a $\Delta\in J$ such
that $f_1(\Delta)-f_2(\Delta)=0$.
\end{proof}

This theorem applies only to one-dimensional distributions. The approach of constraining the distributions to have measure $\epsilon$ at a common location can be extended to higher-dimensions, but actually determining whether the moment constraints can still be satisfied becomes significantly hairier; see \cite{Curto:1991vx} for a sketch of the truncated moment solutions for higher dimensions.


An argument similar to the one given in the last two paragraphs of the previous proof
can be used to show that if  $\gamma_{2,i}-\gamma_{1,i}^2$ are all equal for all
$i\in \mathcal{Y}$  and any finite $G$,  and if the class priors are equal,
then the optimal $\Delta$ is
$\frac{\gamma_{1,\min}+\gamma_{1,\max}}{2}$, where
$\gamma_{1,\min}$ and $\gamma_{1,\max}$ are the smallest and the largest values in the set
$\{\gamma_{1,i}\}_{i\in \mathcal{Y}}$, respectively. To see this, we start with rewriting the
function $f_i$:
\[
f_i(\Delta)=1-\frac{(\gamma_{1,i}-
  \Delta)^2}{\gamma_{2,i}+ \Delta^2 - 2\Delta \gamma_{1,i}}=
\frac{\sigma_i^2}{\sigma_i^2+(\Delta-\gamma_{1,i})^2}.
\]
This shows, that if the condition mentioned above holds then the functions $f_i$ are
shifted versions of each other. Let $f_{\min}$ and $f_{\max}$ be the functions corresponding
to $\gamma_{1,\min}$ and $\gamma_{1,\max}$, respectively and let $\Delta'$ be the point
where $f_{\min}$ and $f_{\max}$ intersect. Because of the strict nature of the derivatives of
$f_i$ and because the functions are shifted versions of each other, for any $\Delta\ge
\Delta'$,  $f_{\min}$ is smaller than any other $f_i$. Because of symmetry, it is
true that for any $\Delta\le \Delta'$, $f_{\max}$ is smaller than any other
$f_i$. Again, by symmetry we have that $\Delta'=\frac{\gamma_{1,\min}+\gamma_{1,\max}}{2}$
and therefore this is the optimal $\Delta$.

\begin{cor}\label{cor3moments}
Suppose that $\mathcal{X}=\R$ and that the first, the second and the third moments are
given for $G$
class-conditional measures, i.e. for the i{\it th} class-conditional
measure we are given
$\{\gamma_{1,i},\gamma_{2,i},\gamma_{3,i}\}$. Then the Bayes error has lower bound
\begin{align*}
  \sup \E[P_e] &\ge  \sup_{\delta > 0} \sup_{\Delta\in\R} \left[\sum_{i=1}^G
    p(i)\left(1-\frac{(\gamma_{1,i}- \Delta)^2}{\gamma_{2,i}+ \Delta^2 -
      2\Delta \gamma_{1,i}}-\delta\right) -\max_{i\in\mathcal{Y}}
    \left\{p(i)\left(1-\frac{(\gamma_{1,i}- \Delta)^2}{\gamma_{2,i}+ \Delta^2 -
      2\Delta \gamma_{1,i}}-\delta\right)\right\}\right]\\
&\ge (G-1) \sup_{\Delta\in\R} \min_{i\in\mathcal{Y}}
    \left\{p(i)\left(1-\frac{(\gamma_{1,i}- \Delta)^2}{\gamma_{2,i}+ \Delta^2 -
      2\Delta \gamma_{1,i}}\right)\right\}.
\end{align*}
\end{cor}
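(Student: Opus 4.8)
The plan is to mirror the construction in the proof of Theorem~\ref{thm}, the only genuinely new ingredient being the feasibility analysis of the truncated moment problem once a third moment is also prescribed. As before, I would shift all $G$ measures by a common $\Delta$, place a shared atom of size $\epsilon_i$ at the shifted origin for each class $i$, and let the residual measure $\tilde\mu_i$ carry the shifted moments $\tilde\gamma_{1,i}=\gamma_{1,i}-\Delta$, $\tilde\gamma_{2,i}=\gamma_{2,i}+\Delta^2-2\Delta\gamma_{1,i}$, and $\tilde\gamma_{3,i}$. Since the atom at the origin contributes nothing to moments of order $\ge 1$, the measure $\tilde\mu_i$ must have total mass $1-\epsilon_i$ together with these three moments. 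Recalling $f_i(\Delta)=1-\frac{(\gamma_{1,i}-\Delta)^2}{\gamma_{2,i}+\Delta^2-2\Delta\gamma_{1,i}}=1-\frac{\tilde\gamma_{1,i}^2}{\tilde\gamma_{2,i}}$ from the proof of Theorem~\ref{thm}, the two-moment analysis caps the shared mass at $\epsilon_i\le f_i(\Delta)$; the point of the corollary is that matching the extra moment $\tilde\gamma_{3,i}$ forces us to stay strictly below this cap.

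The decisive step is the feasibility of the three-moment problem. I would set $\epsilon_i=f_i(\Delta)-\delta$, so that $1-\epsilon_i=\frac{\tilde\gamma_{1,i}^2}{\tilde\gamma_{2,i}}+\delta$, and examine the Hankel matrix $A(1)=\left[\begin{smallmatrix}1-\epsilon_i & \tilde\gamma_{1,i}\\ \tilde\gamma_{1,i} & \tilde\gamma_{2,i}\end{smallmatrix}\right]$, whose determinant is $(1-\epsilon_i)\tilde\gamma_{2,i}-\tilde\gamma_{1,i}^2=\delta\,\tilde\gamma_{2,i}$. Thus $A(1)$ is strictly positive definite (rank $2$) for every $\delta>0$, which admits a rank-preserving flat extension; equivalently, the two-node Gauss quadrature rule for the functional determined by $\{1-\epsilon_i,\tilde\gamma_{1,i},\tilde\gamma_{2,i},\tilde\gamma_{3,i}\}$ has real distinct nodes and positive weights, and the resulting two-atom measure reproduces all four moments exactly by Curto and Fialkow's results \cite{Curto:1991vx}. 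Hence for every $\delta>0$ there is a genuine discrete $\tilde\mu_i$ matching all three given moments while sharing an atom of size $f_i(\Delta)-\delta$ with the other classes. At $\delta=0$ the determinant vanishes, $A(1)$ drops to rank $1$, and $\tilde\mu_i$ is forced to be a single atom whose third moment is then determined by its first two and generically differs from $\tilde\gamma_{3,i}$; this is exactly why the supremum must be taken over $\delta>0$ rather than at $\delta=0$. (If a Gauss node happens to land at the origin it only enlarges the shared mass, which can only raise the Bayes error, so the lower bound is unaffected.)

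With $\epsilon_i=f_i(\Delta)-\delta$ admissible, the Bayes-error estimate is verbatim that of Theorem~\ref{thm} and gives $\sup\E[P_e]\ge\sum_{i=1}^G p(i)\epsilon_i-\max_{i\in\mathcal{Y}}\{p(i)\epsilon_i\}$; substituting $\epsilon_i=f_i(\Delta)-\delta$ and taking the supremum over admissible $\Delta$ and over $\delta>0$ yields the first displayed inequality. For the second inequality I would fix $\Delta,\delta$, let $i^*$ attain the inner maximum, and use $\sum_i p(i)\epsilon_i-\max_i\{p(i)\epsilon_i\}=\sum_{i\ne i^*}p(i)\epsilon_i\ge (G-1)\min_i\{p(i)\epsilon_i\}\ge (G-1)\bigl(\min_i\{p(i)f_i(\Delta)\}-\delta\bigr)$, where the last step uses $p(i)\le 1$. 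Letting $\delta\to 0^+$ and then taking the supremum over $\Delta$ recovers $(G-1)\sup_{\Delta\in\R}\min_{i\in\mathcal{Y}}\{p(i)f_i(\Delta)\}$, which is the claimed second inequality. It is worth noting that the bracket is non-increasing in $\delta$, so the supremum over $\delta>0$ equals the $\delta\to0^+$ limit, i.e.\ the two-moment bound of Theorem~\ref{thm}; the content of the corollary is precisely that prescribing a third moment shrinks the feasible family yet the same lower bound is still approached.

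The main obstacle is the feasibility step: certifying that for every small $\delta>0$ the three prescribed moments are realized by an admissible (positive, correctly normalized) measure that still shares the atom at the origin, and that this realization degenerates at $\delta=0$. The delicate checks are that the Gauss nodes are real and distinct and the weights remain positive, both of which follow from positive-definiteness of $A(1)$, together with the observation handling a node at the origin. Everything downstream is bookkeeping parallel to Theorem~\ref{thm}.
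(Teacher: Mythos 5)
Your proposal is correct and takes essentially the same route as the paper: in both, the crux is that choosing $\epsilon_i = f_i(\Delta)-\delta$ makes the Hankel matrix $A(1)$ strictly positive definite (hence invertible) for every $\delta>0$, so the Curto--Fialkow criterion for the odd ($n=3$) truncated moment problem is satisfied and a representing measure with the prescribed third moment exists, after which the Bayes-error bookkeeping and the passage $\delta\to 0^+$ mirror Theorem~\ref{thm}. The paper states the feasibility step as the range condition $\mathbf{v}_2\in\ran A(1)$ being automatic from invertibility, whereas you phrase it via a flat extension/Gauss quadrature construction; these are the same Curto--Fialkow fact, and your added remarks (the degeneration at $\delta=0$, and absorbing a node at the origin into the shared mass) match the paper's own discussion.
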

\begin{proof}
  In this case we have a list of four numbers
  $\{1-\epsilon,\gamma_1,\gamma_2,\gamma_3\}$ and again
  \[
  A=A(1)=
  \begin{bmatrix}
    1-\epsilon &\gamma_1\\
    \gamma_1 &\gamma_2
  \end{bmatrix}.
  \]
  If $\gamma_0=1-\epsilon>0$ then $A$ is positive definite
  if $\epsilon\le 1-\frac{\gamma_1^2}{\gamma_2}-\delta<1-\frac{\gamma_1^2}{\gamma_2}$. In this case
  $\mathbf{v}_{2}=(\gamma_2,\gamma_3)^T$ and it is in the range of $A$
  since $A$ is invertible. The statements in Section \ref{sec:tm}
  imply that there is a measure with moments
  $\{1-\epsilon,\gamma_1,\gamma_2,\gamma_3\}$ and consequently that
\begin{align*}
      \sup \E[P_e] &\ge  \sup_{\delta > 0} \left[\sum_{i=1}^G
        p(i)\left(1-\frac{\gamma_{1,i}^2}{\gamma_{2,i}}-\delta\right)
        -\max_{i\in\mathcal{Y}}
        \left\{p(i) \left(1-\frac{\gamma_{1,i}^2}
            {\gamma_{2,i}}-\delta\right)\right\} \right]\\
      &\ge (G-1)\min_{i\in\mathcal{Y}}
      \left\{p(i)\left(1-\frac{\gamma_{1,i}^2}{\gamma_{2,i}}\right)\right\},
\end{align*}
The rest of the proof follows
analogously to the proof of Theorem \ref{thm}.
\end{proof}

The proof of Corollary \ref{cor3moments} relies on the fact that for $\delta>0$
the matrix $A(1)$ featured in the proof is invertible, so one of the conditions
for the existence of a measure with the given moments is automatically satisfied
(see Appendix). If $\delta=0$ then $A(1)$ is only positive semidefinite and it
is not obvious that the vector $\mathbf{v}_2$ is in the range of $A(1)$.

The following lemma is stated for completeness.
\begin{lem}\label{lemsketch}
Suppose that $\mathcal{X}=\R$ and that the first $n$ moments are given for $G$ equally likely
class-conditional measures, i.e. for the i{\it th} class-conditional
measure we are given
$\{\gamma_{1,i},\gamma_{2,i},\ldots, \gamma_{n,i}\}$. Then if there
exist measures of the form $\epsilon_i\delta_0+\nu_i$ where $\nu_i$
satisfies the moments conditions given above the corresponding Bayes
error can be bounded from below:
\[
\sup \E[P_e] \ge
      1-\frac{1}{G}\left[\max_{i\in\mathcal{Y}}\{\epsilon_i\}+\sum_{i=1}^G
        (1-\epsilon_i)\right]= \frac{1}{G}\left[\sum_{i=1}^G
        \epsilon_i-\max_{i\in\mathcal{Y}}\{\epsilon_i\}\right],
\]
where the supremum on the left is taken over all the measures satisfying the
moment constraints noted above.
\end{lem}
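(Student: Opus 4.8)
The plan is to specialize the chain of inequalities developed in the proof of Theorem~\ref{thm} to the case of equal priors $p(i)=1/G$ and to run it for all $n$ prescribed moments at once, since nothing in that chain actually used that only two moments were given. By hypothesis there exist measures of the form $\epsilon_i\delta_0+\nu_i$ in which each $\nu_i$ matches the zeroth moment $1-\epsilon_i$ together with the prescribed moments $\gamma_{1,i},\dots,\gamma_{n,i}$; exactly as in the proof of Theorem~\ref{thm}, Curto and Fialkow's existence result \cite{Curto:1991vx} lets me assume each $\nu_i$ is finitely atomic, so that each class-conditional measure $\mu_i=\epsilon_i\delta_0+\nu_i$ is a discrete probability measure satisfying the moment constraints. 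Writing $\nu_i=\sum_{j\ge1}\alpha_{j,i}\delta_{x_j}$ with $x_j\neq0$ and $\sum_{j}\alpha_{j,i}=1-\epsilon_i$, this particular collection is admissible, so $\sup\E[P_e]$ dominates its Bayes error.

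The main computation is then to evaluate that Bayes error from (\ref{eqn:bayeserror}) and to bound it. First I would substitute $p(i)=1/G$ and separate the shared atom at $0$ from the remaining atoms, giving
\[
\E[P_e]=1-\frac{1}{G}\Big[\max_{i\in\mathcal{Y}}\epsilon_i+\sum_{j\ge1}\max_{i\in\mathcal{Y}}\alpha_{j,i}\Big].
\]
Next, precisely as in the step leading to (\ref{eq:maxrhs}), I would bound $\max_{i}\alpha_{j,i}\le\sum_{i=1}^G\alpha_{j,i}$ for each $j$, sum over $j$, and interchange the two nonnegative summations to get $\sum_{j}\max_i\alpha_{j,i}\le\sum_{i=1}^G\sum_{j}\alpha_{j,i}=\sum_{i=1}^G(1-\epsilon_i)$. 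Combining the two displays yields the first claimed inequality, and a short algebraic simplification of $1-\frac1G[\max_i\epsilon_i+\sum_i(1-\epsilon_i)]$ produces the equivalent form $\frac1G[\sum_i\epsilon_i-\max_i\epsilon_i]$.

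The only genuinely delicate point is the reduction to finitely atomic $\nu_i$: the Bayes-error expression (\ref{eqn:bayeserror}) and the atom-by-atom manipulation above are valid only once the measures are known to be discrete (or absolutely continuous), and it is Curto and Fialkow's theorem that supplies a discrete representative with the prescribed moments. Everything else is routine specialization — the interchange of the double sum is justified because all $\alpha_{j,i}\ge0$, and the only overlap actually counted is the shared mass at $0$, so discarding the possible overlap at the other atoms is exactly what keeps the estimate a valid lower bound. Since the argument never appeals to $n=2$, it applies verbatim for any finite number $n$ of prescribed moments, which is why the lemma holds as stated.
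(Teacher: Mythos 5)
Your proposal is correct and is essentially the paper's own argument: the paper's proof consists of the single remark that ``the first part of the proof of Theorem~\ref{thm} is applicable,'' and what you have written is precisely that first part (Curto--Fialkow reduction to discrete measures, separation of the shared atom at $0$, and the bound $\max_i \alpha_{j,i} \le \sum_i \alpha_{j,i}$) specialized to equal priors and carried out for $n$ moments. Your added care about the discreteness reduction and the sign of the interchange of sums is a faithful expansion of what the paper leaves implicit, not a different route.
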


\begin{proof}
The first part of the proof of Theorem \ref{thm} is applicable in this
case.
\end{proof}
As in the case for two moments, the lower bound can be further tightened by optimizing over all
possible shifts of the overlap Dirac measure.

\section{Upper Bound for Worst-Case Bayes Error}\label{sec:upperbound}
Because the Bayes error is the smallest error over all decision boundaries, one approach
to constructing an upper bound on the maximum Bayes error is to restrict the set of
considered decision boundaries to a set for which the worst-case error is easier to
analyze. For example, Lanckreit et al. \cite{Lanckreit:02} take as given the first and
second moments of each class-conditional distribution, and attempt to find the linear
decision boundary classifier that minimizes the worst-case classification error rate with
respect to any choice of class-conditional distributions that satisfy the given moment
constraints. Here we show that this approach can be extended to produce an
upper bound on the supremum Bayes error for the $G=2$ case.

Let $\mathcal{X}$ be any feature space. Suppose one has two fixed class-conditional
measures $\nu_1,\nu_2$ on $\mathcal{X}$. As in Lanckreit et al. \cite{Lanckreit:02},
consider the set of linear decision boundaries. Any linear decision boundary splits the domain into two half-spaces $S_1$ and $S_2$. We work with linear decision boundaries because these are the
only kind of decision boundaries that split the domain into two convex subsets. The error produced by a linear decision boundary corresponding to the split $(S_1, S_2)$ is
\begin{equation}\label{eqn:ldberror}
p(1)  \nu_1(S_2) + p(2) \nu_2(S_1)\geq \E[P_e](\nu_1,\nu_2).
\end{equation}
That is, the error from any linear decision boundary upper bounds the Bayes error $\E[P_e]$ for  two given measures. To obtain a tighter upper bound on the Bayes error, minimize the left-hand side over all linear decision boundaries:
\begin{equation}\label{eqn:ldberrorTight}
\inf_{S_1,S_2}  \left( p(1)\nu_1(S_2) + p(2) \nu_2(S_1)\right) \geq \E[P_e](\nu_1,\nu_2).
\end{equation}

Now suppose $\nu_1$ and $\nu_2$ are unknown, but their first moments (means) and second
centered moments $(\mu_1, \Sigma_1)$ and $(\mu_2, \Sigma_2)$ are given. Then we note that
the supremum over all measures $\nu_1$ and $\nu_2$ with those moments of the
smallest linear decision boundary error forms an upper bound on the supremum Bayes error where the supremum is taken with respect to the feasible measures $\nu_1,\nu_2$:
\begin{align}
\sup \E[P_e] &\leq \sup_{\nu_1| \mu_1, \Sigma_1} \sup_{\nu_2| \mu_2, \Sigma_2}  \inf_{S_1,S_2}   \left( p(1)\nu_1(S_2) +  p(2) \nu_2(S_1) \right)  \label{eqn:upper1}\\
& \leq  \inf_{S_1,S_2}\left( p(1)\sup_{\nu_1| \mu_1, \Sigma_1}
  \nu_1(S_2) +  p(2)\sup_{\nu_2| \mu_2, \Sigma_2}  \nu_2(S_1) \right). \label{eqn:upper2}
\end{align}

This upper bound can be simplified using the following result\footnote{Some readers may recognize this result as a strengthened and generalized version of the Chebyshev-Cantelli inequality.} by Bertsimas and Popescu \cite{Bertsimas:2005fe} (which follows from a result by Marshall and Olkin \cite{Marshall:60}):
\begin{equation}\label{eqn:marshall}
\sup_{\nu|\mu, \Sigma} \nu(S) = \frac{1}{1 + c(S)} \textrm{ where } c(S) = \inf_{x
  \in S} (x - \mu)^T\Sigma^{-1}(x - \mu),
\end{equation}
where the $\sup$ in (\ref{eqn:marshall}) is over all probability measures $\nu$ with
domain $\mathcal{X}$, mean $\mu$, and centered second moment $\Sigma$; and $S$ is some convex set
in the domain of $\nu$.


Since $S_1$ and $S_2$ in (\ref{eqn:upper2}) are half-spaces, they are convex and
(\ref{eqn:marshall}) can be used to quantify the upper bound. For the rest of this section
let $\mathcal{X}$ be one dimensional. Then the covariance matrices $\Sigma_1$
and $\Sigma_2$ are just scalars that we denote by $\sigma_1^2$ and $\sigma_2^2$, respectively. In one-dimension, any decision boundary that results in a half-plane split is simply a point  $s \in \R$. Without loss of generality with respect to the Bayes error,
let $\mu_1 = 0$ and $\mu_1 \leq \mu_2$.   Then  $c(S_1)$ and $c(S_2)$ in
(\ref{eqn:marshall}) simplify (for details, see Appendix A of \cite{Lanckreit:02}), so
that (\ref{eqn:upper2}) becomes
\begin{equation}
\sup \E[P_e] \leq \min\left\{\inf_{s} \left(
    \frac{p(1)}{1 + \frac{s^2}{\sigma_1^2}}  + \frac{p(2)}{1 + \frac{(\mu_2 -s)^2}{\sigma_2^2}}
  \right),1\right\}. \label{eqn:upper3}
\end{equation}

If $\sigma_1=\sigma_2$ and $p(1) = p(2) = 1/2$, then the infimum occurs at $s = \mu_2/2$ and the upper bound
becomes
\begin{equation*}
\sup \E[P_e]  \leq \min\left\{\frac{4}{4+\frac{\mu_2^2}{\sigma_1^2}} ,
  \frac{1}{2}\right\}.
\end{equation*}
For this case, the given upper bound is twice the given lower bound.

\section{Comparison to Error With Gaussians}\label{sec:experiments}
We illustrate the bounds described in this paper for the common case that the first two moments are
known for each class, and the classes are equally likely.  We compare with the Bayes error produced under the assumption that the distributions are Gaussians with the given moments. In both cases the first distribution's mean is 0 and the variance is 1, and the second distribution's mean is
varied from 0 to 25 as shown on the x-axis. The second distribution's variance is 1 for the comparison shown in the top of  Fig. \ref{fig:lowerbound}. The second distribution's
variance is 5 for the comparison shown in the bottom of Fig. \ref{fig:lowerbound}. For the first case, $\sigma_1 = \sigma_2$ so the infimum in (\ref{eqn:upper3}) occurs at $s = \mu_2/2$ and the upper bound is $\min\left\{4/(4 + \mu_2^2),\frac{1}{2}\right\}$. For the second case with different variances we compute (\ref{eqn:upper3}) numerically.

Fig. \ref{fig:lowerbound} shows that the Bayes error produced by the Gaussian assumption is optimistic
compared to the given lower bound for the worst-case (maximum) Bayes error. Further, the difference
between the Gaussian Bayes error and the lower bound is much larger in
the second case when the variances of the two distributions differ.

\begin{figure}[t]
 \centering
 \begin{tabular}{c}
\includegraphics[width=.5\textwidth]{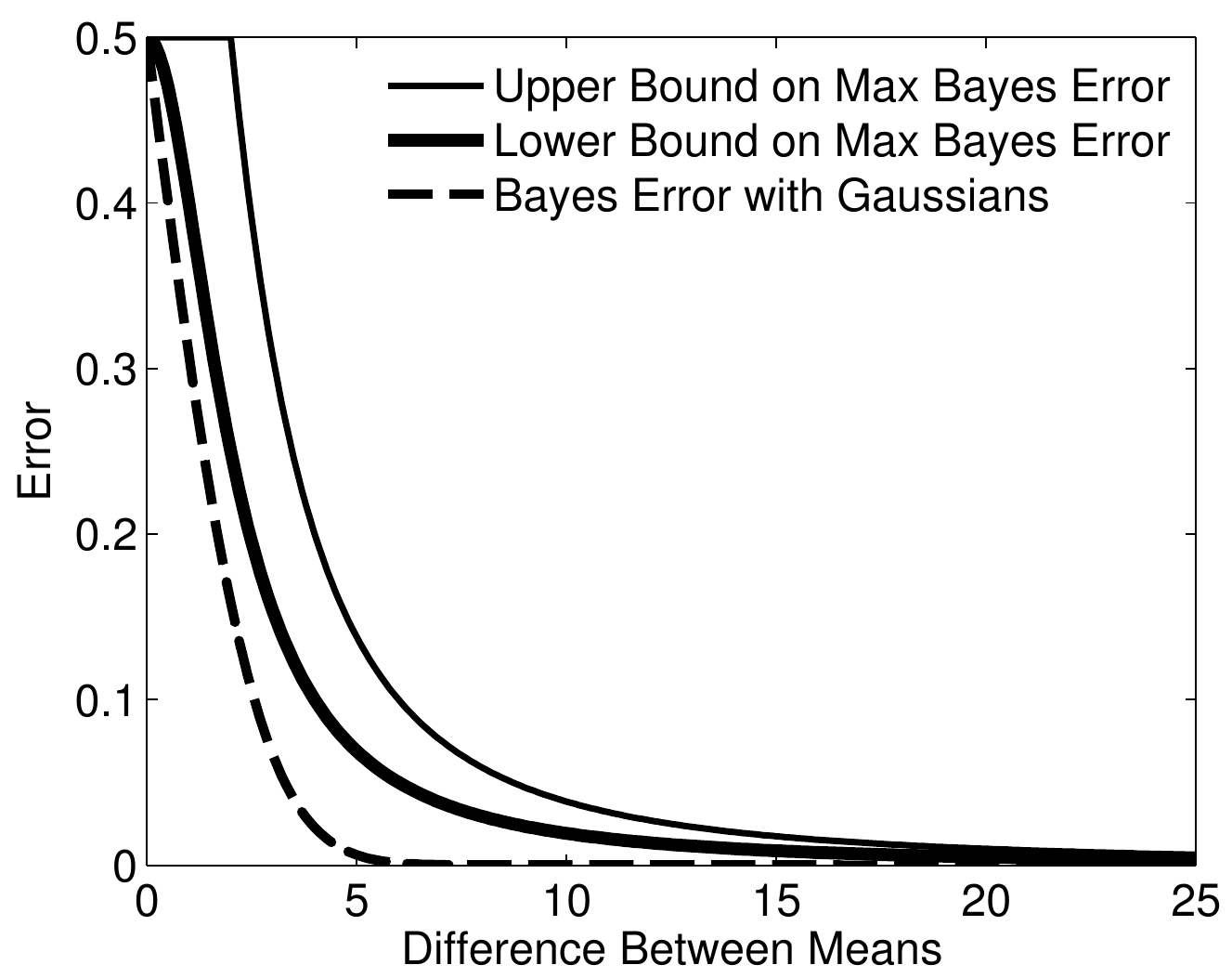}\\
 $\sigma_2^2=1$\\
\\
\includegraphics[width=.5\textwidth]{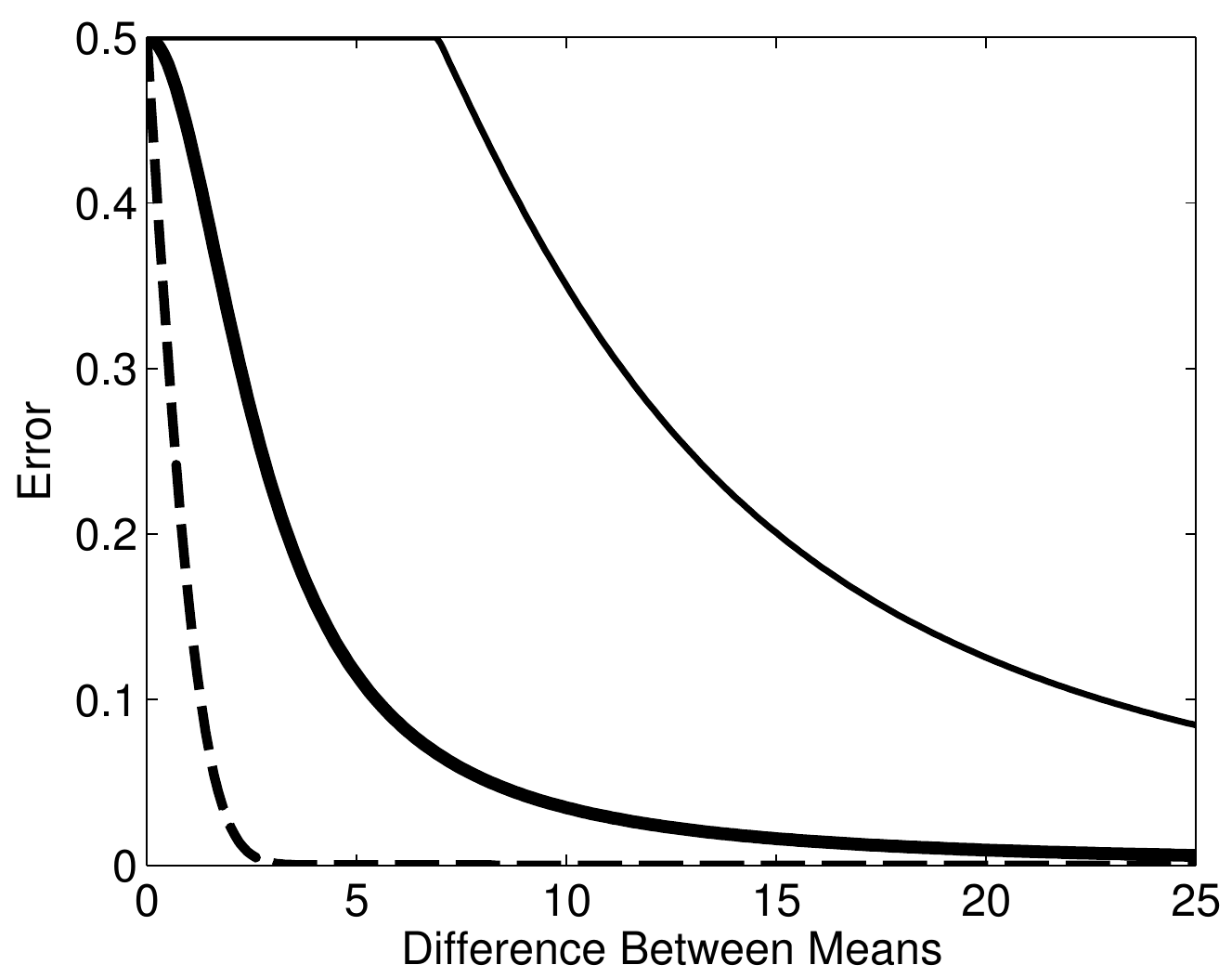}\\
$\sigma_2^2=5$\\
\\
\end{tabular}
 \caption{Comparison of the given lower bound for the worst-case
Bayes error with the Bayes error produced by Gaussian
class-conditional distributions.}
 \label{fig:lowerbound}
\end{figure}

\section{Discussion and Some Open Questions}
We have provided a lower and upper bound on the worst-case Bayes error, but a number of open questions arise from this work.

Lower bounds for the worst-case Bayes error can be constructed by constraining the distributions. We have shown that constraining the distributions to be Gaussians produces a weak lower bound, and we provided a tighter lower bound by constraining the distributions to overlap in a Dirac measure of $\epsilon$.  Given only first moments, our lower bound is tight in that it is arbitrarily close to the worst
possible Bayes error. Given two moments, we have shown that the
common QDA Gaussian assumption for class-conditional distributions is
much more optimistic than our lower bound and increasingly optimistic for increased difference between the variances. However, because in our constructions we do not control all the possible overlap between the
class-conditional distributions, we believe it should be possible to
construct tighter lower bounds.

On the other hand, upper bounds on the worst-case Bayes error can be constructed by
constraining the considered decision boundaries. Here, we considered an upper bound
resulting from restricting the decision boundary to be linear. For the two moment case, we
have shown that work by Lanckreit et al. leads almost directly to an upper bound. However,  the inequality we had to introduce in (\ref{eqn:upper2}) when we switched the
$\inf$ and $\sup$ may make this upper bound loose. It remains an open question if there are conditions under which the upper bound is tight.

Our result that the popular Gaussian assumption is generally not very robust in terms of  worst-case Bayes error prompts us to question whether there are other distributions that are mathematically or computationally convenient to use in generative classifiers that would have a Bayes error closer to the given lower bound.

In practice, a moment constraint is often created by estimating the moment from samples drawn iid from the distribution. In that case, the moment constraint need not be treated as a hard constraint as we have done here. Rather, the observed samples can imply a probability distribution over
the moments, which in turn could imply a distribution over corresponding
bounds on the Bayes error. A similar open question is a sensitivity analysis of how changes in the moments would affect the bounds.

Lastly, consider the opposite problem:  given constraints on the first  $n$ moments for each of the class-conditional distributions, how small could the Bayes error be? It is tempting to suppose that one could generally find discrete measures that overlapped nowhere, such that the Bayes error was zero. However, the set of measures which satisfy a set of moment constrains may be nowhere dense, and that impedes us from being able to make such a guarantee. Thus, this remains an open question.

\section*{Appendix}



\section{Existence of Measures with Certain
Moments}\label{sec:tm}
The proof of our theorem reduces to the problem of how to check if a
given list of $n$
numbers could be the moments of some measure. This problem is called the truncated moment problem; here we review the relevant solutions by Curto and Fialkow \cite{Curto:1991vx}.

Suppose we are given a list of numbers $\gamma =
\{\gamma_0,\gamma_1,\ldots,\gamma_n\}$, with $\gamma_0 > 0$. Can this
collection be a list of moments for some positive Borel measure $\nu$
on $\R$ such that
\begin{equation}\label{eqn:moments}
\gamma_i=\int s^id\nu(s)?
\end{equation}
Let $k = \lfloor n/2\rfloor$, and construct a Hankel matrix $A(k)$
from $\gamma$ where the $i${\it th} row of $A$ is $[\gamma_{i-1} \:
\gamma_{i} \ldots \gamma_{i-1+k}]$. For example, for $n=2$ or $n=3$,
$k=1$:
\begin{equation*}
A(1) = \begin{bmatrix}
 \gamma_0 &\gamma_1\\
 \gamma_1 &\gamma_2
\end{bmatrix}.
\end{equation*}
Let $\mathbf{v}_j$ be the transpose of the vector $(\gamma_{i+j})_{i=0}^k$. For $0\le j\le k$ this vector is the $j+1${\it th} column of $A(k)$. Define $\rank(\gamma) = k+1$ if
$A(k)$ is invertible, and otherwise $\rank(\gamma)$ is the smallest $r$ such that
$\mathbf{v}_r$ is a linear combination of
$\{\mathbf{v}_0,\ldots,\mathbf{v}_{r-1}\}$. 


Then whether there exists a $\nu$ that satisfies (\ref{eqn:moments})
depends on $n$ and $k$:

\begin{enumerate}[1)]
\item If $n = 2k+1$, then there exists such a
  solution $\nu$ if $A(k)$ is positive semidefinite and $\mathbf{v}_{k+1}$ is
  in the range of $A(k)$.

\item If $n = 2k$, then there exists such a
  solution $\nu$ if $A(k)$ is positive semidefinite and $\rank(\gamma)
  = \rank(A(k))$.
\end{enumerate}
Also, if there exists a $\nu$ that satisfies
(\ref{eqn:moments}), then there definitely exists a solution with
atomic measure.

\bibliographystyle{IEEEtran}
\bibliography{BayesErrorBiblio}

\end{document}